%% file: main.tex
\documentclass[sigconf,nonacm]{acmart}
\AtBeginDocument{%
  }
    
\usepackage{subcaption}
\usepackage{xurl}
\usepackage{enumitem}
\usepackage{amsmath}
\usepackage{multirow}
\usepackage{booktabs}
\usepackage{xcolor}
\definecolor{posgreen}{RGB}{0,120,60}
\definecolor{negred}{RGB}{170,60,60}

\setcopyright{acmlicensed}
\copyrightyear{2018}
\acmYear{2018}
\acmDOI{XXXXXXX.XXXXXXX}
\acmConference[Conference acronym 'XX]{Make sure to enter the correct
  conference title from your rights confirmation email}{June 03--05,
  2018}{Woodstock, NY}
\acmISBN{978-1-4503-XXXX-X/2018/06}

\begin{document}

%%
%% The "title" command has an optional parameter,
%% allowing the author to define a "short title" to be used in page headers.
\title{Less can be More: Relieving RAG Bottlenecks\\ via Evidence Frontloading and Pressure-Adaptive Budgeting}

%%
%% The "author" command and its associated commands are used to define
%% the authors and their affiliations.
%% Of note is the shared affiliation of the first two authors, and the
%% "authornote" and "authornotemark" commands
%% used to denote shared contribution to the research.
\author{Weibin Cai}
\email{weibin44@data.syr.edu}
\affiliation{%
  \institution{Data Lab, EECS Department}
  \institution{Syracuse University}
  \city{Syracuse}
  \state{NY}
  \country{USA}
}

\author{Reza Zafarani}
\email{reza@data.syr.edu}
\affiliation{%
  \institution{Data Lab, EECS Department}
  \institution{Syracuse University}
  \city{Syracuse}
  \state{NY}
  \country{USA}
}

\renewcommand{\shortauthors}{Cai and Zafarani}

%%
%% By default, the full list of authors will be used in the page
%% headers. Often, this list is too long, and will overlap
%% other information printed in the page headers. This command allows
%% the author to define a more concise list
%% of authors' names for this purpose.
% \renewcommand{\shortauthors}{Trovato et al.}

%%
%% The abstract is a short summary of the work to be presented in the
%% article.
\begin{abstract}
Existing methods for improving Retrieval-Augmented Generation (RAG) efficiency mainly optimize downstream LLM generation, such as context compression or serving optimization. 
However, RAG is an end-to-end system, and its bottleneck can shift between upstream reranking and downstream generation under different serving loads and reranking budgets.
In this paper, we first empirically characterize this shifting-bottleneck behavior and show that upstream reranking can become the dominant bottleneck under high query rates or large reranking budgets. 
Reducing the reranking budget can relieve this bottleneck, but it may also drop supporting evidence and degrade recall. 
To address this problem, we propose \textbf{\textsf{PACE}} (\textbf{P}rioritized \textbf{A}daptive \textbf{C}overage of \textbf{E}vidence), a training-free framework that combines \textit{evidence frontloading} with \textit{pressure-adaptive budgeting}. 
\textsf{PACE} first reorders candidates by marginal evidence coverage, prioritizing documents that are query-relevant, complementary, and useful for forming multi-hop evidence chains. 
We show that this objective is monotone submodular, giving greedy selection a $(1-1/e)$ approximation guarantee. 
\textsf{PACE} then dynamically adjusts the reranking budget according to the relative pressure of the reranker and the LLM. 
Experiments on three multi-hop QA datasets and online serving simulations show that \textsf{PACE} improves evidence recall, reduces p95 latency under ranking-heavy workloads. 
More importantly, the two components together reveal that \textit{less can be more}: an evidence-dense top-ranked candidates enable higher final recall with fewer reranked documents. 
% \footnote{Some core anonymous code is available at \url{https://anonymous.4open.science/r/XXXX-XXXX/}; we will open-source all the organized code and data upon acceptance.} 
\end{abstract}

%%
%% The code below is generated by the tool at http://dl.acm.org/ccs.cfm.
%% Please copy and paste the code instead of the example below.
%%
\begin{CCSXML}
<ccs2012>
   <concept>
       <concept_id>10002951.10003317.10003338</concept_id>
       <concept_desc>Information systems~Retrieval models and ranking</concept_desc>
       <concept_significance>500</concept_significance>
       </concept>
   <concept>
       <concept_id>10002951.10003317.10003359</concept_id>
       <concept_desc>Information systems~Evaluation of retrieval results</concept_desc>
       <concept_significance>500</concept_significance>
       </concept>
 </ccs2012>
\end{CCSXML}

\ccsdesc[500]{Information systems~Retrieval models and ranking}
\ccsdesc[500]{Information systems~Evaluation of retrieval results}

%%
%% Keywords. The author(s) should pick words that accurately describe
%% the work being presented. Separate the keywords with commas.
\keywords{Retrieval-augmented generation, RAG bottleneck mitigation, evidence frontloading, pressure-adaptive budgeting, evidence recall, multi-hop question answering}

%%
%% This command processes the author and affiliation and title
%% information and builds the first part of the formatted document.
\maketitle

\input{Sections/Introduction}

\input{Sections/Preliminary_Analysis}

\input{Sections/Method}

\input{Sections/Experiments}

\input{Sections/Related_Work}

\vspace{-2mm}
\section{Conclusion}

In this paper, we study RAG as an end-to-end serving system and show that its bottleneck is not fixed at LLM generation. 
Instead, the dominant bottleneck can shift between reranking and generation as query arrival rates and reranking budgets change. 
Motivated by this observation, we propose \textsf{PACE}, a training-free framework that relieves upstream reranking bottlenecks while preserving evidence recall. 
\textsf{PACE} frontloads useful evidence into the top of the candidate ranking through a monotone submodular marginal coverage objective, and then adaptively selects the reranking budget based on real-time reranker and LLM pressure. 
Experiments on multi-hop QA datasets and online serving simulations demonstrate that \textsf{PACE} improves evidence recall, substantially reduces latency under ranking-heavy workloads, and shows that less can be more when the top-ranked candidates are evidence-dense: a smaller reranking budget can lead to higher final evidence recall.

%%
%% The next two lines define the bibliography style to be used, and
%% the bibliography file.
\bibliographystyle{ACM-Reference-Format}
\bibliography{reference}

%%
%% If your work has an appendix, this is the place to put it.
\appendix

\end{document}

%% file: Sections/Introduction.tex
\section{Introduction}

\begin{figure}
    \centering
    \includegraphics[width=\linewidth]{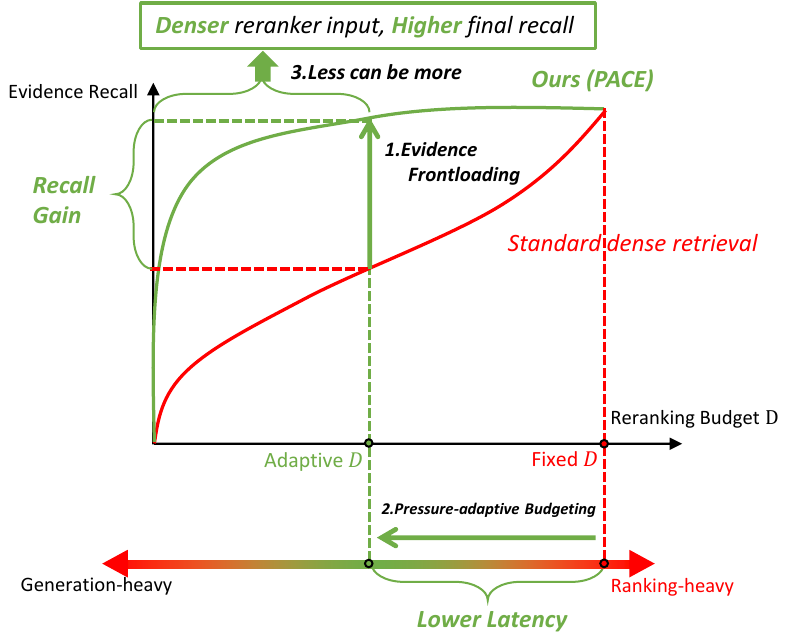}
    \vspace{-7.5mm}
    \caption{
    How \textsf{PACE} improves RAG effectiveness and efficiency.  
    (1) Evidence frontloading makes top-ranked candidates more evidence-dense, achieving higher evidence recall at a smaller reranking budget $D$. 
    (2) Pressure-adaptive budgeting then reduces $D$ under ranking-heavy workloads to lower latency. 
    (3) Together, \textsf{PACE} enables denser reranker input and higher final recall with fewer reranked documents. 
    }
    \label{fig:intro}
    \vspace{-6mm}
\end{figure}

RAG has become a widely used paradigm for equipping Large Language Models (LLMs) with external knowledge across diverse downstream tasks~\cite{shuster2021retrieval,lewis2020retrieval,borgeaud2022improving}. 
A typical RAG pipeline first retrieves query-relevant chunks from a corpus using a dense retriever~\cite{karpukhin2020dense,xiong2020approximate,khattab2020colbert}, then applies a reranker~\cite{nogueira2019passage,sun2023chatgpt,nogueira2020document} to refine their order and feeds the top-$k$ chunks to the LLM as context. 
The quality of the generated answer therefore largely depends on whether the context (top-$k$ chunks) covers the evidence needed to answer the query. 
Increasing $k$ can improve evidence recall, but longer contexts also make RAG systems harder to process effectively and efficiently: useful information may be lost in the middle~\cite{liu2024lost}, noisy sentences can degrade generation~\cite{shi2023large}, and long inputs introduce substantial inference overhead~\cite{dejean2026efficient,yu2024rankrag}. 
Existing work addresses these issues from two main directions. 
At the system level, techniques such as KV-cache reuse~\cite{zheng2024sglang}, scheduling~\cite{yu2022orca}, and optimized prefilling~\cite{agrawal2024taming} accelerate LLM inference. 
At the data and algorithm level, context compressors~\cite{chirkova2025provence,xu2023recomp,jiang2023llmlingua} aim to preserve core information while removing irrelevant tokens from the retrieved context. 
These methods reduce input length or accelerate generation, thereby alleviating downstream bottlenecks (i.e., generation stage bottleneck) while largely maintaining question-answering performance.

However, RAG is an end-to-end system, and improving it requires coordinating upstream retrieval/reranking with downstream generation for both effectiveness and efficiency. 
For effectiveness, if necessary evidence is not recalled upstream, removing noisy context downstream cannot recover the missing information or improve answer accuracy. 
For efficiency, If queries are already queued at the reranker, optimizing only the generator may not improve end-to-end latency. 
Consequently, approaches that mainly optimize downstream generation may not generalize across different RAG configurations and serving loads. 
In particular, as the reranking budget (i.e., the number of retrieved candidates sent to the reranker) and QPS (queries per second) vary, the system bottleneck can shift between ranking and generation. 
When queries arrive frequently and many documents must be reranked, the ranking stage can become the bottleneck; otherwise, latency may be dominated by generation. 
We refer to these two scenarios as \textit{ranking-heavy} and \textit{generation-heavy}.
Although using a small fixed reranking budget can relieve ranking-heavy workloads, this fixed budget is difficult to choose under dynamic serving loads, and an overly small budget can miss necessary evidence and degrade answer quality. 
Therefore, how to \textit{dynamically relieve upstream ranking bottlenecks while preserving evidence recall} remains an open problem. A promising solution requires two key properties, as illustrated in Figure~\ref{fig:intro}: 

% \vspace{0.3mm}
\noindent \textbf{Evidence frontloading.} 
This is especially important for multi-hop questions, where answering a query often requires multiple supporting documents rather than a single highly relevant chunk. 
If these supporting documents can be frontloaded into the first few candidates, the system can use a smaller reranking budget, reducing upstream workload while preserving evidence recall. 
Existing retrieval-side methods partially address this goal through multi-hop retrieval or diversity-aware ranking. 
Multi-hop retrievers usually acquire supporting evidence through iterative retrieval or reasoning-guided search~\cite{xiong2020answering,khattab2021baleen,trivedi2023interleaving}, but they require specialized retriever training or introduce additional LLM-retrieval iterations, which can increase serving cost. 
In contrast, the goal in this scenario is \textit{not to acquire new evidence through extra retrieval steps, but to prioritize evidence within an existing candidate pool} so that a smaller reranking budget can still preserve recall under serving pressure. 
Diversity-aware methods are closer to this setting because they usually reorder existing candidates by reducing redundancy. 
For example, maximal marginal relevance (MMR)~\cite{carbonell1998use} penalizes similarity to previously selected documents, while Dartboard~\cite{pickett2024better} optimizes relevant information gain for diverse RAG retrieval. 
However, these methods remain limited for multi-hop evidence frontloading due to three issues: 
(i) \textit{coarse-grained relevance modeling}, as they typically rely on document-level similarity rather than fine-grained semantic dimensions; 
(ii) \textit{missing inter-document dependencies}, as a supporting document may be weakly related to the query but strongly connected to another evidence document; and 
(iii) \textit{limited robustness}, as their performance often depends on dataset-sensitive hyperparameters and lacks principled performance guaranties. 

% \vspace{0.3mm}
\noindent \textbf{Pressure-adaptive budgeting.}  
Instead of relying on a fixed budget, it should dynamically choose the largest budget that does not make reranking a bottleneck relative to generation. 
This requires jointly considering upstream reranking pressure and downstream generation pressure, so that the system can preserve as much evidence as possible while relieving ranking-heavy workloads. 
Together, these two properties enable the system to use \textit{a smaller reranking budget with higher evidence recall}, thereby reducing upstream bottlenecks while maintaining or even improving evidence recall.

\noindent \textbf{In this work.} 
We first characterize RAG bottlenecks under online serving simulation with varying QPS and reranking budgets. 
Our analysis shows that the bottleneck is not determined solely by the relative size of the reranker and the LLM: \textit{increasing QPS or the reranking budget can shift the dominant bottleneck from generation to upstream reranking}. 
This finding motivates us to focus on the ranking-heavy scenario, where reducing downstream generation cost alone is insufficient. 
To address this problem, we propose \textbf{\textsf{PACE}}, short for \textit{\textbf{P}rioritized \textbf{A}daptive \textbf{C}overage of \textbf{E}vidence}, to relieve upstream reranking bottlenecks while preserving evidence recall. 
\textsf{PACE} consists of two components. 
First, \textbf{\textit{evidence frontloading}} reorders candidates so that the top-ranked candidates contain evidence that is directly relevant to the query, complementary to already selected documents, and useful for forming a complete evidence chain. 
We formulate this as a marginal evidence coverage problem: using fewer documents to cover more query semantics. 
Each document's coverage is weighted by its direct query relevance and its potential to bridge evidence chains. 
This objective is monotone submodular, giving greedy frontloading a $(1-1/e)$ approximation guarantee. 
Second, \textbf{\textit{pressure-adaptive budgeting}} dynamically adjusts the reranking budget according to the relative queueing pressure of the reranker and the LLM, reducing reranker workload when upstream ranking becomes the bottleneck. 
Together, the two components enable a key effect: \textit{less can be more when the top-ranked candidates are evidence-dense}. 
Although adaptive budgeting sends fewer candidates to the reranker, evidence frontloading makes these candidates more useful and less noisy, increasing the chance that the reranker promotes complete evidence into the final top-$K$ context. 
Figure~\ref{fig:intro} summarizes how these two components jointly improve RAG effectiveness and efficiency, leading to the ``less can be more'' effect. 
Our contributions are as follows: 
\begin{itemize}
    \item We empirically identify shifting bottlenecks in RAG serving, showing that the dominant bottleneck can move between reranking and generation as the number of queries per second and the reranking budget change. 
    \item We propose \textsf{PACE}, a training-free framework that combines marginal evidence frontloading with pressure-adaptive budgeting to reduce upstream reranking workload while preserving evidence recall. We further show that the proposed evidence coverage objective of \textbf{\textsf{PACE}} is monotone submodular, which leads to its greedy selection to provide a $(1-1/e)$ approximation guarantee under a cardinality constraint. 
    \item Experiments on three multi-hop QA datasets and online serving simulations show that \textsf{PACE} improves evidence recall, substantially reduces latency under ranking-heavy workloads, and demonstrate that \textit{less can be more when top-ranked candidates are evidence-dense}: achieve higher final recall with a smaller reranking budget. 
\end{itemize}
% Some core anonymous code is available at \url{https://anonymous.4open.science/r/PACE-RAG-3CC5/}; we will open-source all the organized code and data upon acceptance. 

%% file: Sections/Preliminary_Analysis.tex
\section{RAG Bottlenecks Shift Across Configurations and Loads}
\label{sec:shift_bottleneck}

\begin{figure*}[t]
    \centering
    \begin{subfigure}[t]{0.32\textwidth}
        \centering
        \includegraphics[width=\linewidth]{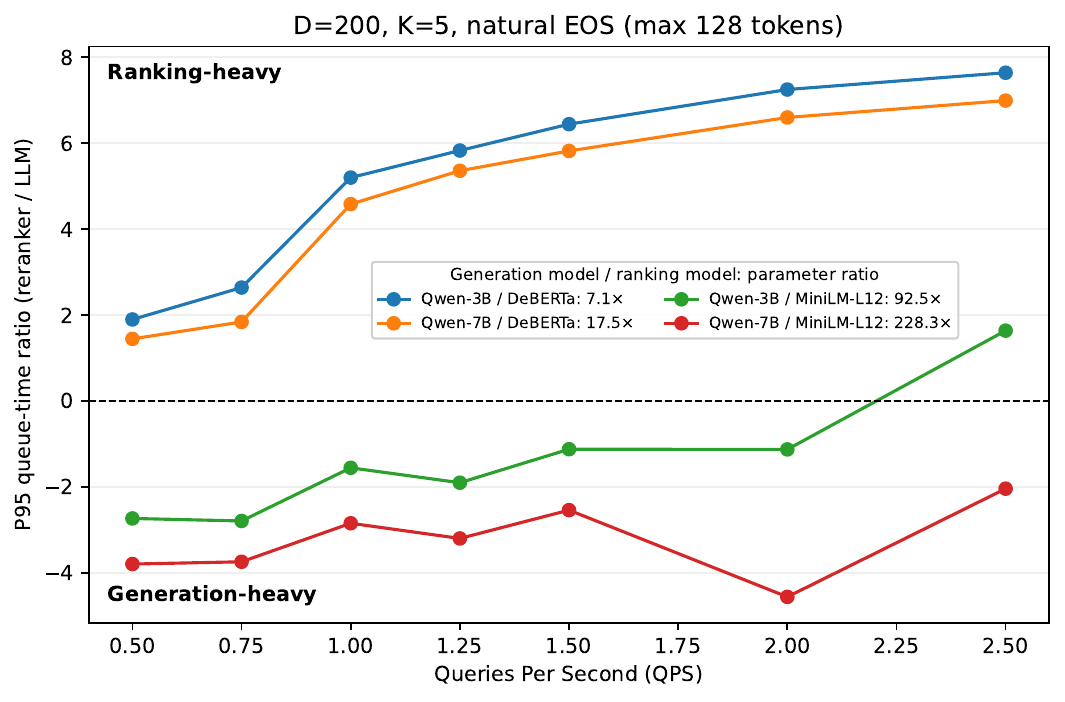}
        \caption{Bottleneck tendency across reranker--LLM model pairs under increasing QPS.}
        \label{fig:shift_across_model}
    \end{subfigure}
    \hfill
    \begin{subfigure}[t]{0.32\textwidth}
        \centering
        \includegraphics[width=\linewidth]{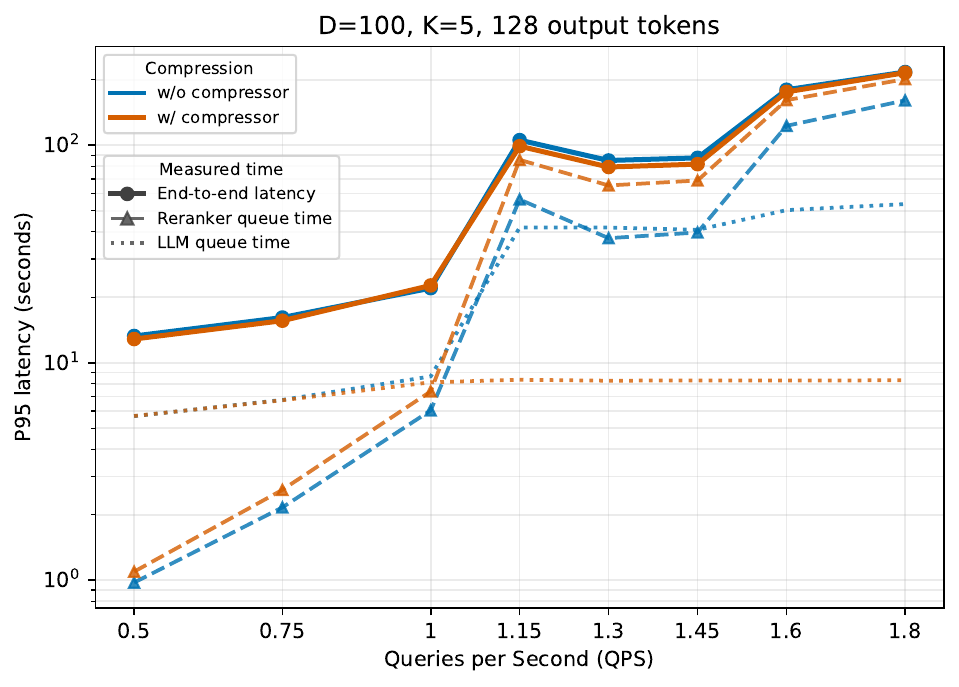}
        \caption{Ranking-heavy workloads}
        \label{fig:ranking_heavy}
    \end{subfigure}
    \hfill
    \begin{subfigure}[t]{0.32\textwidth}
        \centering
        \includegraphics[width=\linewidth]{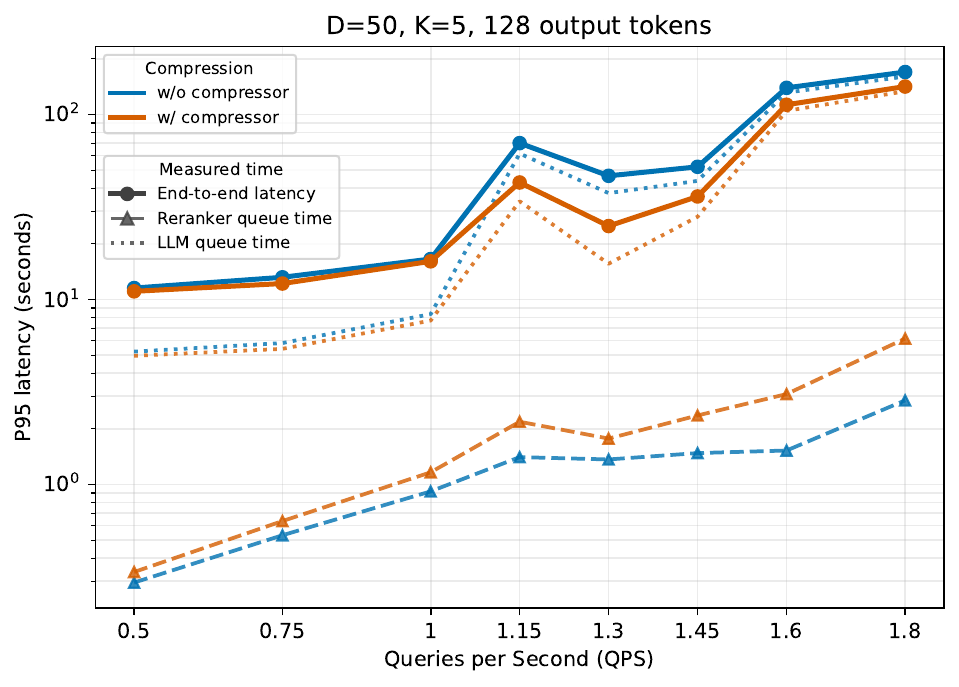}
        \caption{Generation-heavy workloads}
        \label{fig:generation_heavy}
    \end{subfigure}
    \vspace{-2mm}
    \caption{Shifting RAG bottlenecks under different configurations and serving loads. Here, $D$ denotes reranking budget and $K$ denotes the number of documents sent to the LLM. In (a), we vary QPS across two rerankers: \texttt{trecdl22-crossencoder-debertav3}, and \texttt{ms-marco-MiniLM-L12-v2}, and two LLMs: \texttt{Qwen2.5-3B-Instruct}, and \texttt{Qwen2.5-7B-Instruct}. We report $\log_2\left( \frac{\text{P95 reranker queue time}}{\text{P95 LLM queue time}} \right)$, where positive values indicate ranking-heavy serving and negative values indicate generation-heavy serving. In (b) and (c) we use \texttt{Qwen2.5-3B-Instruct} and \texttt{trecdl22-crossencoder-debertav3} pair and vary $D$ with and without context compressor.}
    \label{fig:shifting_bottleneck}
    \vspace{-4mm}
\end{figure*}

% \begin{table}[t]
% \centering
% \caption{Notation.}
% \label{tab:notation}
% \begin{tabular}{cl}
% \toprule
% Symbol & Meaning \\
% \midrule
% $D$ & \# reranked documents (i.e., reranking budget) \\
% $K$ & \# top documents sent to LLM \\
% $I$ & \# LLM input tokens \\
% $O$ & \# LLM output tokens \\
% \bottomrule
% \end{tabular}
% \end{table}

In a RAG system, the LLM contains most of the parameters and computational complexity, so it is often assumed to be the main serving bottleneck. 
However, the bottleneck is not always fixed at generation. 
It can shift between upstream reranking and downstream generation under different model choices, configurations (e.g., reranking budget $D$), and serving loads (e.g., queries per second). 
We characterize this shift by comparing how long requests wait in the reranker queue and the LLM queue: 
if requests wait longer in the reranker queue than in the LLM queue, reranking is the bottleneck; otherwise, generation is the bottleneck. 
We report p95 queueing latency, i.e., the latency below which 95\% of requests fall, because serving bottlenecks often appear as tail latency and indicate system saturation under high load. 
This paper focuses on the reranking and generation stages, and measures their queueing delays to identify which stage limits system throughput. 
To study RAG bottlenecks under realistic serving conditions, we use an open-loop workload, where requests are issued according to a fixed arrival process independent of previous request completion. 
The detailed datasets, models, and serving setup are described in Section~\ref{sec:exp_set}; here we focus on the observed bottleneck behavior.

\vspace{1mm}
\noindent \textbf{\textit{1. Model size affects bottleneck tendency, while load and reranking budget determine the actual bottleneck.}} 
Figure~\ref{fig:shift_across_model} shows that, across different reranker--LLM model pairs, increasing queries per second generally increases upstream reranking pressure. 
Relative model size affects the bottleneck, but high load can still make reranking the bottleneck even with a much larger LLM. 
For example, the parameter ratio between Qwen-3B and MiniLM-L12 is 92.5, and this pair behaves as generation-heavy under low QPS but shifts to ranking-heavy when QPS reaches 2.5. 
Reranking budget also changes the bottleneck. 
For the Qwen-3B and DeBERTa pair, Figure~\ref{fig:ranking_heavy} shows that with reranking budget $D=100$, end-to-end latency becomes dominated by reranker queueing delay once QPS reaches 1, whereas Figure~\ref{fig:generation_heavy} shows that with $D=50$, latency remains dominated by LLM queueing delay.

\vspace{1mm}
\noindent \textbf{\textit{2. Relieving downstream bottlenecks cannot necessarily reduce upstream reranking pressure.}} 
Figures~\ref{fig:ranking_heavy} and~\ref{fig:generation_heavy} show that context compression reduces LLM queueing delay by shortening the input context. 
As a result, it improves end-to-end latency when generation is the dominant bottleneck. 
However, this gain becomes limited when the system is ranking-heavy. 
In Figure~\ref{fig:ranking_heavy}, once QPS reaches 1, most latency comes from the reranker queue. 
Although compression accelerates LLM processing, it does not reduce reranker queueing delay; moreover, the compressor consumes additional compute, which can further increase reranker pressure and queueing time. 
As a result, its end-to-end benefit remains limited, especially under high QPS and large $D$.

%% file: Sections/Method.tex
\section{\textsf{PACE}: Prioritized Adaptive Coverage of Evidence}
The observations in Section~\ref{sec:shift_bottleneck} show that RAG bottlenecks shift with serving load and system configuration. 
To relieve shifting bottlenecks, the system should adaptively allocate the reranking budget $D$ rather than using a fixed value. 
However, simply reducing $D$ may miss necessary evidence and degrade answer quality. 
We therefore propose \textsf{PACE} (Prioritized Adaptive Coverage of Evidence), to pace the RAG system by addressing two requirements: 
(1) adapting the reranking budget to system pressure, and 
(2) preserving evidence recall by moving useful evidence into top-ranked candidates.

\subsection{Evidence Frontloading}
\label{sec:evidence_frontload}
To preserve evidence recall under a small reranking budget, we first frontload useful evidence to the top of the candidate ranking. 
Given a query $q$ and a candidate document set $\mathcal{C}=\{d_i\}_{i=1}^{D_{\max}}$ returned by the dense retriever, where $D_{\max}$ denotes the maximum of reranking budget, e.g., top-$100$, our goal is to reorder $\mathcal{C}$ so that the first few documents cover as much necessary and non-redundant evidence as possible. 
To make the top-ranked documents useful, each selected document should satisfy two conditions. 
First, it should be relevant to the query and cover part of the query's information need, such as a specific entity, event, or location. 
Second, it should provide complementary evidence rather than repeat dimensions that have already been covered by previously selected documents. 
Thus, documents should be prioritized not only by their individual query relevance, but also by their marginal contribution to the uncovered evidence space. 
As more documents are selected, the remaining uncovered dimensions become fewer, and the gain of adding redundant documents naturally decreases. 
We formalize this intuition as a marginal evidence coverage problem.

\paragraph{Marginal evidence covergae.} 
Let $q(v)\ge 0$ and $d_i(v)\ge 0$ denote the representation values of query $q$ and document $d_i$ on semantic dimension $v$, respectively. 
The value on each dimension reflects the amount of information carried by the query or document along that dimension. 
For a selected document set $S\subseteq \mathcal{C}$, we define the coverage objective as 
\begin{equation}
    F(S) = \sum_v q(v) \max_{d_i \in S}\left[ \sqrt{\rho_i} d_i(v) \right]. 
    \label{eq:coverage}
\end{equation}
where $\rho_i$ denotes the query-document relevance weight, which can be instantiated by the dense retriever score. 
The current coverage of dimension $v$ is 
\begin{equation}
    m_S(v) = \max_{d_i \in S}\left[ \sqrt{\rho_i} d_i(v) \right]. 
\end{equation}
Thus, the marginal gain of adding document $d_i$ to $S$ is 
\begin{equation}
    \Delta(d_i \mid S) = \sum_v q(v) \max \left[ \sqrt{\rho_i} d_i(v) - m_S(v), 0 \right]. 
\end{equation}
At each step, we greedily select the document with the largest marginal gain: 
\begin{equation}
    d^\star = \arg\max_{d_i \in \mathcal{C} \setminus S} \Delta(d_i \mid S). 
\end{equation}
This produces a ranking prefix that prioritizes relevant and complementary evidence.

\paragraph{Soft-anchor relevance refinement.} 
However, using only the dense retriever score as $\rho_i$ can miss supporting documents that are weakly related to the query but strongly connected to other evidence documents. 
This is common in multi-hop QA, where one document may provide an intermediate fact or entity bridge rather than directly matching the query. 
To capture such dependencies, we refine the relevance weight using soft anchors, i.e., candidate documents with high query relevance that serve as evidence seeds. 
Documents similar to these anchors receive higher weights, allowing the ranking prefix to include not only directly query-relevant documents but also bridge documents that connect multiple supporting evidence into a more complete evidence chain. 
Together with \textit{marginal evidence coverage}, this refined relevance weight enables the system to frontload evidence that is both comprehensive and useful. 
We next describe how to compute this refinement. 

Let $\rho_i$ denote the initial query-document relevance score from the dense retriever, and let $\operatorname{sim}(d_i,d_j)$ denote the similarity between two candidate documents. 
We first robustly standardize the relevance scores using the median absolute deviation: 
\begin{equation}
    z_i = \frac{\rho_i - \operatorname{median}(\rho)}{\operatorname{MAD}(\rho)+\epsilon},
    \label{eq:z_stand}
\end{equation}
where $\epsilon=10^{-12}$ is used for numerical stability, and 
\begin{equation}
    \operatorname{MAD}(\rho) = \operatorname{median}_i\left| \rho_i - \operatorname{median}(\rho) \right|.
\end{equation}
We then convert the standardized scores into soft-anchor weights:
\begin{equation}
    w_i = \frac{\exp(z_i)}{\sum_j \exp(z_j)}. 
\end{equation}
Thus, candidates with higher query relevance receive larger weights and act as soft evidence anchors. 
We estimate each document's soft-anchor relevance by aggregating its similarity to these anchors: 
\begin{equation}
    a_j = \frac{\sum_{i\neq j} w_i \operatorname{sim}(d_i, d_j)}{\sum_{i\neq j}w_i}. 
\end{equation}
Finally, we combine the direct query-document relevance and the anchor-based relevance as
\begin{equation}
    \tilde{\rho}_j = 1 - (1 - b_j)(1 - \bar{a}_j), 
    \label{eq:anchor_refine}
\end{equation}
where $b_j=\operatorname{norm}(\rho_j)$, $\bar{a}_j=\operatorname{norm}(a_j)$, and $\operatorname{norm}(\cdot)$ denotes min-max normalization.  
The refined weight $\tilde{\rho}_j$ is then used as the relevance weight in Eq.~\ref{eq:coverage}, allowing the coverage objective to favor documents that are directly relevant to the query while also recovering documents connected to query-relevant evidence through document-document dependencies. 
This refinement can be applied either before or after reranking to improve recall; the only difference is the source of the query-document relevance score in Eq.~\ref{eq:z_stand}. 
Before reranking, $\rho_i$ is obtained from the dense retriever, while after reranking, it is obtained from the reranker. 
We evaluate this combination of usage in Section~\ref{sec:pace_improve}, Figure~\ref{fig:online_recall}.

\paragraph{Theoretical property.} 
Beyond its empirical effectiveness, the proposed \textit{marginal evidence coverage} objective $F(S)$ is monotone submodular, which gives greedy selection a standard approximation guarantee to the optimal evidence coverage under a cardinality constraint.

\begin{theorem}
Assume $q(v)\ge 0$, $\tilde{\rho}_i \ge 0$, and $d_i(v)\ge 0$ for all $v$ and $d_i$. 
Then $F(S)$ is monotone submodular. 
\end{theorem}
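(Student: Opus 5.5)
The plan is to decompose $F$ dimension by dimension and reduce to the standard fact that a weighted maximum is monotone submodular.

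\textbf{Setup.} For each document set the weight $c_i(v) = \sqrt{\tilde{\rho}_i}\, d_i(v)$. Since $\tilde{\rho}_i \ge 0$ the square root is real, and $d_i(v)\ge 0$ gives $c_i(v)\ge 0$. Adopt the convention $\max_{d_i\in\emptyset}[\cdot]=0$, so that $F(\emptyset)=0$. This is consistent with the marginal-gain formula, which effectively uses $m_\emptyset(v)=0$.

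Then
\[
F(S)=\sum_v q(v)\, g_v(S), \qquad g_v(S)=\max_{d_i\in S} c_i(v).
\]
A nonnegative linear combination of monotone submodular functions is monotone submodular, and $q(v)\ge 0$. So it suffices to show that each $g_v$ is monotone submodular.

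\textbf{Monotonicity.} For $S\subseteq T$, the maximum over $T$ ranges over a superset of the set used for $S$. Hence $g_v(S)\le g_v(T)$. For $S=\emptyset$ this uses $c_i(v)\ge 0$ together with the convention $g_v(\emptyset)=0$.

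\textbf{Submodularity (the main step).} Fix $S\subseteq T\subseteq\mathcal{C}$ and $d_j\notin T$. The marginal gain is
\[
g_v(S\cup\{d_j\})-g_v(S)=\max\bigl[c_j(v)-m_S(v),0\bigr].
\]
This identity is exactly the form of $\Delta(d_j\mid S)$ given earlier, restricted to one dimension.

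The map $x\mapsto \max[c_j(v)-x,0]$ is nonincreasing in $x$. By monotonicity, $m_S(v)\le m_T(v)$. Therefore
\[
\max\bigl[c_j(v)-m_S(v),0\bigr]\ \ge\ \max\bigl[c_j(v)-m_T(v),0\bigr],
\]
which is the diminishing-returns inequality $g_v(S\cup\{d_j\})-g_v(S)\ge g_v(T\cup\{d_j\})-g_v(T)$.

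Multiplying by $q(v)\ge 0$ and summing over $v$ gives $\Delta(d_j\mid S)\ge \Delta(d_j\mid T)$. This is submodularity of $F$.

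\textbf{Expected obstacle.} I expect the only delicate point to be the empty-set and nonnegativity conventions. The step $g_v(S\cup\{d_j\})=\max[g_v(S),c_j(v)]$ and the monotonicity at $\emptyset$ both rely on $c_i(v)\ge 0$ and $g_v(\emptyset)=0$. This is why the hypotheses $\tilde{\rho}_i\ge 0$ and $d_i(v)\ge 0$ are needed. The condition $\tilde{\rho}_i\ge 0$ holds automatically from Eq.~\ref{eq:anchor_refine}, since $b_j,\bar a_j\in[0,1]$.

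With monotone submodularity and $F(\emptyset)=0$ established, the $(1-1/e)$ guarantee for greedy selection under a cardinality constraint follows from the classical result of Nemhauser, Wolsey, and Fisher.
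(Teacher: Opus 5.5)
Your proposal is correct and follows essentially the same argument as the paper: monotonicity because a maximum over a superset can only grow, and diminishing returns because $m_A(v)\le m_B(v)$ and $x\mapsto\max[c_j(v)-x,0]$ is nonincreasing, followed by weighting with $q(v)\ge 0$ and summing over $v$. Your explicit convention $m_\emptyset(v)=0$, together with $c_i(v)\ge 0$, is a small but welcome tightening of a point the paper leaves implicit.
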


\begin{proof}
First, we prove monotonicity. 
For any $A \subseteq B \subseteq \mathcal{C}$ and any dimension $v$, we have 
\begin{equation}
    \max_{d_i \in A} \left[ \sqrt{\tilde{\rho}_i} d_i(v) \right] \le \max_{d_i \in B} \left[ \sqrt{\tilde{\rho}_i} d_i(v) \right]. 
\end{equation}
Since $q(v)\ge 0$, summing over all dimensions gives 
\begin{equation}
    F(A) \le F(B). 
\end{equation}
Therefore, $F(\cdot)$ is monotone. 

Next, we prove submodularity by showing diminishing returns. 
For any $A \subseteq B \subseteq \mathcal{C}$ and any $d_j \notin B$, define 
\begin{equation}
    m_A(v) = \max_{d_i \in A} \left[ \sqrt{\tilde{\rho}_i} d_i(v) \right], \quad m_B(v) = \max_{d_i \in B} \left[ \sqrt{\tilde{\rho}_i} d_i(v) \right]. 
\end{equation}
Because $A \subseteq B$, we have $m_A(v) \le m_B(v)$ for every $v$. 
Therefore, 
\begin{equation}
    \max \left[ \sqrt{\tilde{\rho}_j}d_j(v) - m_A(v), 0 \right] \ge \max \left[ \sqrt{\tilde{\rho}_j}d_j(v) - m_B(v), 0 \right]. 
\end{equation}
Multiplying by $q(v)\ge 0$ and summing over $v$ yields 
\begin{equation}
    \Delta(d_j \mid A) \ge \Delta(d_j \mid B). 
\end{equation}
Thus, $F(\cdot)$ is monotone submodular. 

\end{proof}

\begin{corollary}
Under a cardinality constraint $|S| \le K$, let $S_{\mathrm{greedy}}$ be the set selected by the greedy algorithm that iteratively adds the document with the largest marginal gain. 
Then, 
\begin{equation}
    F(S_{\mathrm{greedy}}) \ge (1-1/e)F(S^\star), 
\end{equation}
where
\begin{equation}
    S^\star = \arg\max_{|S| \le K}F(S). 
\end{equation}
is the optimal document set under the same constraint. 
\end{corollary}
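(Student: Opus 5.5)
The corollary follows from the preceding theorem combined with the classical Nemhauser--Wolsey--Fisher analysis of greedy maximization of a monotone submodular function under a cardinality constraint. I will also need $F(\emptyset)=0$, which holds if we adopt the convention that the max over an empty set is $0$. This convention is consistent with all terms being nonnegative, so $m_\emptyset(v)=0$. I will reprove the standard bound rather than only cite it, so that the role of each property is visible.

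\textbf{Step 1: relate the optimum to the current greedy set.} Let $S_t$ denote the greedy set after $t$ steps, for $t=0,\dots,K$, and let $S^\star=\{o_1,\dots,o_k\}$ with $k\le K$. By monotonicity,
\begin{equation*}
F(S^\star)\le F(S_t\cup S^\star).
\end{equation*}
Telescope over adding $o_1,\dots,o_k$ to $S_t$ one at a time. By submodularity from the theorem, each increment is at most $\Delta(o_\ell\mid S_t)$. Since greedy picks the maximizer of $\Delta(\cdot\mid S_t)$, each increment is at most $\Delta(d_{t+1}\mid S_t)=F(S_{t+1})-F(S_t)$. This gives
\begin{equation*}
F(S^\star)-F(S_t)\le K\,\bigl(F(S_{t+1})-F(S_t)\bigr).
\end{equation*}
The case $o_\ell\in S_t$ is harmless, since the gain is then $0$.

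\textbf{Step 2: unroll the recursion.} Set $\delta_t=F(S^\star)-F(S_t)$. Step 1 gives $\delta_{t+1}\le(1-1/K)\,\delta_t$, so
\begin{equation*}
\delta_K\le(1-1/K)^K\delta_0\le e^{-1}F(S^\star),
\end{equation*}
using $\delta_0=F(S^\star)-F(\emptyset)=F(S^\star)$. Rearranging yields the claim.

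\textbf{Edge cases.} If $|\mathcal{C}|<K$, or greedy would stop early, then greedy simply takes all of $\mathcal{C}$, which is optimal by monotonicity. The argument also requires the greedy maximization to be exact, which holds because $\mathcal{C}$ is finite.

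The main obstacle is the nonemptiness convention. The theorem's proof implicitly needs $F(\emptyset)=0$ and a well-defined $m_\emptyset$. I would state this convention explicitly before invoking the theorem for the chain in Step 1; everything else is routine.
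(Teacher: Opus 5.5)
Your proposal is correct and follows the same route as the paper, which simply invokes the Nemhauser--Wolsey--Fisher greedy guarantee from the non-negativity, monotonicity, and submodularity of $F$; you just reprove that classical argument (compare each greedy step against $S^\star$, then unroll to $(1-1/K)^K \le e^{-1}$). Stating the convention $m_\emptyset(v)=0$, so $F(\emptyset)=0$, usefully makes explicit what the paper leaves implicit (any $F(\emptyset)\ge 0$ would also suffice). One small correction to your edge cases: if greedy stops early because the largest marginal gain is zero, it has not taken all of $\mathcal{C}$, but your Step~1 inequality with zero gain already shows the current set is optimal.
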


\begin{proof}
Since $F(S)$ is non-negative, monotone, and submodular, the result follows from the classical greedy approximation guarantee for monotone submodular maximization under a cardinality constraint~\cite{nemhauser1978analysis}. 
\end{proof}
This result shows that greedy evidence frontloading achieves at least a $(1-1/e)$ approximation to the optimal evidence coverage, while remaining efficient enough for reranking-time use.

\subsection{Pressure-adaptive Budgeting}

\begin{table}[t]
\centering
\small
\caption{Notation for pressure-adaptive budgeting.}
\vspace{-2mm}
\label{tab:pressure_notation}
\begin{tabular}{ll}
\toprule
Symbol & Meaning \\
\midrule
$P_R$ & Number of unprocessed document pairs in the reranker queue \\
$\hat{\mu}_R$ & Real-time estimated reranker throughput \\
$W_R$ & Estimated time to clear the reranker queue \\
$|Q_L|$ & Number of queries waiting in the LLM queue \\
$B_L$ & LLM batch size \\
$B_R$ & Reranker batch size \\
$\hat{T}_L$ & Estimated execution time of one LLM batch \\
$T_{\mathrm{remain}}$ & Remaining time of the current LLM batch \\
$E_R$ & Excess reranker workload compared with the LLM \\
$D_{\min}$ & Minimum reranking budget \\
$D_{\max}$ & Maximum reranking budget \\
$t_{\mathrm{now}}$ & Current timestamp \\
$t_{\mathrm{start}}$ & Start timestamp of the current LLM batch \\
\bottomrule
\end{tabular}
\vspace{-5mm}
\end{table}

Evidence frontloading makes the ranking prefix more evidence-dense, but the system still needs to decide how many candidates should be processed by the reranker under changing serving pressure. 
A fixed reranking budget $D$ is suboptimal: a large $D$ preserves recall but can create a reranker bottleneck, while a small $D$ reduces latency but may miss necessary evidence. 
We therefore introduce pressure-adaptive budgeting, which dynamically selects the largest affordable reranking budget for each query based on the relative pressure of the reranker and the LLM. 
Together with evidence frontloading, this policy balances the pressure between the two stages while preserving high evidence recall. 
Table~\ref{tab:pressure_notation} summarizes the notation used in this section.

We estimate reranker pressure as the time needed to clear the current reranker backlog: 
\begin{equation}
    W_R = \frac{P_R}{\hat{\mu}_R}. 
\end{equation}
For the LLM, we estimate the remaining service time by combining the current batch and the queued batches: 
\begin{equation}
    T_{\mathrm{remain}} = \max \left( \hat{T}_L - (t_{\mathrm{now}} - t_{\mathrm{start}}), 0 \right), 
\end{equation}
\begin{equation}
    W_L = T_{\mathrm{remain}} + \left \lceil \frac{|Q_L|}{B_L} \right \rceil \hat{T}_L. 
\end{equation}
When $W_R \le W_L$, reranking is not more congested than generation, so the system uses $D_{\max}$. 
When $W_R > W_L$, we estimate the excess reranker backlog as 
\begin{equation}
    E_R = \hat{\mu}_R (W_R - W_L). 
\end{equation}
Given reranker batch size $B_R$, we reduce the reranking budget by full reranker batches: 
\begin{equation}
    D(q) = \max \left ( D_{\min}, D_{\max} - B_R \left\lfloor \frac{E_R}{B_R} \right\rfloor \right). 
\end{equation}
This rule keeps the largest possible reranking budget when reranking is not the bottleneck, and decreases $D(q)$ only when reranker pressure exceeds LLM pressure. 
Together with evidence frontloading, it reduces upstream workload under ranking-heavy conditions while keeping useful evidence concentrated in the processed prefix.

%% file: Sections/Experiments.tex
\section{Experiments}
We evaluate \textsf{PACE} from two perspectives: whether evidence frontloading improves evidence recall among the top-ranked candidates, and whether pressure-adaptive budgeting reduces end-to-end latency under shifting bottlenecks while preserving recall. 
All experiments are conducted on Quadro RTX 6000 GPUs with approximately 22 GB of available memory.

\subsection{Experimental Settings}
\label{sec:exp_set}

\vspace{1mm}
\noindent \textbf{Model selection.} 
Following prior work~\cite{chirkova2025provence}, we use \path{splade-v3}~\cite{lassance2024splade} as the retriever. 
It produces non-negative query and document representations, satisfying the assumptions in Section~\ref{sec:evidence_frontload}. 
We use \path{trecdl22-crossencoder-debertav3}~\cite{dejean2024thorough} as the reranker, \path{Provence}~\cite{chirkova2025provence} as the context compressor, and \path{Qwen2.5-3B-Instruct}~\cite{qwen2025qwen2} as the generator under a resource-limited serving setting. 
All generation experiments use prefilling. 

\vspace{1mm}
\noindent \textbf{Datasets.}  
We evaluate on the dev splits of three multi-hop QA datasets with gold evidence annotations: HotpotQA~\cite{yang2018hotpotqa} with 1,087 queries, MuSiQue~\cite{trivedi2022musique} with 2,317 queries, and 2WikiMultiHop\allowbreak QA~\cite{xanh2020_2wikimultihop} with 2,861 queries. 
For each dataset, we reserve an additional 100 queries to tune baselines that require hyperparameter calibration; these queries are excluded from evaluation. 
Our method does not use this calibration set. 
For HotpotQA and 2WikiMultiHopQA, we retain queries whose complete supporting evidence appears in the top-100 retrieved documents. 
This controls for initial retrieval failures and lets us focus on how effectively each method ranks supporting evidence toward earlier positions at different document budgets. 
Following Provence~\cite{chirkova2025provence}, we use the Wikipedia corpus preprocessing and passage segmentation provided through BERGEN~\cite{rau2024bergen}.
For MuSiQue, we use the official closed-context candidate paragraphs; therefore, all queries are retained, the maximum reranking budget is 20, and no external Wikipedia retrieval or additional passage segmentation is performed.

\vspace{1mm}
\noindent \textbf{Metrics.} 
To evaluate \textit{evidence frontloading}, we measure evidence recall among top-ranked candidates. 
We report \textbf{complete evidence recall@D}, the percentage of queries whose top-$D$ documents contain all gold supporting evidence, and \textbf{supporting evidence recall@D}, the average fraction of gold supporting evidence covered by the top-$D$ documents. 
We evaluate document selection for $D=1,\ldots,100$ on HotpotQA and 2WikiMultiHopQA, and $D=1,\ldots,20$ on MuSiQue, whose closed-context setting provides at most 20 candidate paragraphs per query. 
To evaluate \textit{pressure-adaptive budgeting}, we report \textbf{p95 end-to-end latency}, \textbf{p95 reranker queue time}, and \textbf{p95 LLM queue time}. 
Here, p95 latency is the latency below which 95\% of requests complete.

\vspace{1mm}
\noindent \textbf{Batch size calibration.} 
To ensure that both reranker and LLM achieve maximum processing speed on the local device, thereby yielding reliable efficiency results. 
We profile reranker and LLM on our hardware before the main experiments. 
For the reranker, batch size denotes the number of query-document pairs scored together, and we sweep $B_r \in \left\{2,4,8, 16, 32, 64\right\}$. 
For the LLM, batch size denotes the maximum number of concurrent generation sequences, and we sweep $B_l \in \left\{6,7,8,9,10,11,12\right\}$ with a maximum of 128 new tokens. 
Each candidate batch size is evaluated on the same 100 randomly sampled TydiQA queries~\cite{tydiqa}. 
In this profiling run, the reranker processes the top-50 retrieved documents and returns the top-5 documents to the LLM. 
We measure throughput, p95 latency, and GPU memory usage, repeat each run three times with a fixed random seed. 
We select the smallest batch size whose throughput is within 95\% of the best observed throughput while satisfying latency and memory constraints. 
The selected batch sizes, $B_R=8$ for DeBERTa and $B_L=10$ for Qwen2.5-3B, are fixed in all main experiments so that bottleneck shifts are caused by workload changes rather than per-setting batch-size tuning.

\vspace{1mm}
\noindent \textbf{Online serving simulation.} 
To study RAG bottlenecks under realistic serving conditions, we use an open-loop workload, where requests are issued according to a fixed arrival process independent of previous request completion. 
Queries arrive following a Poisson process~\cite{ray2025metis}, with QPS varying from 0.5 to 2.5. 
We deploy the reranker and the LLM on separate GPUs and connect them with independent asynchronous queues to enable pipeline parallelism. 
To examine whether downstream context reduction translates into system-level efficiency gains, we insert Provence~\cite{chirkova2025provence}, a context compressor, between the reranker and the LLM and run it on a separate GPU. 
For analyzing RAG bottlenecks in Figure~\ref{fig:shifting_bottleneck}, we use the first 60 seconds as warm-up and record per-stage latency over the following 300 seconds. 
For the end-to-end evaluation of \textsf{PACE} in Figure~\ref{fig:online_recall} and~\ref{fig:online_latency}, we set $D_{\min}=20$, and $D_{\max}=100$, issue all evaluation queries according to the target QPS and continue running until all queries are completed. 
This allows us to measure evidence recall under dynamically selected $D$ in an online setting. 
We fix the query order and document order across workloads for fair comparison. 
Generation uses natural end-of-sequence termination with a maximum of 128 output tokens. 
\vspace{-2mm}
\noindent \paragraph{Use of AI tools.} 
We used AI tools to assist with the implementation of experimental scripts for online serving simulation. All generated code was reviewed, tested, and validated by the authors. The AI tool was not used to generate datasets, labels or conclusions. 

% ------------------------------------------------
\vspace{1mm}
\noindent \textbf{Baselines.} 
To evaluate the effectiveness of evidence frontloading, we compare with training-free reordering methods that do not require additional model training. 
All methods operate on the same candidate set returned by the retriever. 
For baselines with tunable hyperparameters, we select the best configuration on the 100 query calibration set of each dataset and apply it unchanged to the evaluation split. 
% \vspace{-2mm}
\begin{itemize}
    \item \textbf{Standard Dense}: 
    It uses the original ranking returned by the dense retriever without any reordering. 
    \item \textbf{Rocchio Pseudo-Relevance Feedback (PRF)}~\cite{rocchio1971relevance}: 
    It assumes the top retrieved documents are pseudo-relevant, updates the query representation by combining the original query with these documents, and then reorders candidates using the expanded query.

% \begin{equation}
%     q'=\alpha\bar{q} + \frac{\beta}{M}\sum_{i \in R_M}\bar{d_i}
% \end{equation}
    
    % $q'=\alpha\bar{q} + \frac{\beta}{M}\sum_{i \in R_M}\bar{d_i}$, 
    % where $\alpha$ and $\beta$ are weights for query and document embeddings; $M$ is the number of top retrieved documents and their indices are formed as a set $R_M$. 
    % In our experiment, $\alpha=0.7$, $\beta=0.3$ and $M=5$ for HotpotQA and 2WikiMultiHopQA; $\alpha=0.3$, $\beta=0.7$ and $M=1$ for MuSiQue; 
    \item \textbf{The Maximal Marginal Relevance (MMR)}~\cite{carbonell1998use}: 
    It greedily selects documents by balancing query relevance and novelty, explicitly penalizing candidates that are similar to previously selected documents.

% \begin{equation}
%     i^*=\operatorname{argmax_{i\notin S}}[(1-\gamma)r_i - \gamma\operatorname{max}_{j \in S}G_{ij}]
% \end{equation}
    % $i^*=\operatorname{argmax_{i\notin S}}[(1-\gamma)r_i - \gamma\operatorname{max}_{j \in S}G_{ij}]$, 
    % where $r_i$ is the standardized score between query and document $i$, and $S$ is the selected document set; $G_{ij}$ is the cosine similarity between document $i$ and $j$; $\gamma$ is the weight to balance query-relevance and information-novelty. 
    % In our experiment, $\gamma=0.35, 0.0, 0.2$ for HotpotQA, 2WikiMultiHopQA, and MuSiQue, respectively. 
    \item \textbf{Dartboard}~\cite{pickett2024better}: 
    It selects documents by maximizing relevant information gain, encouraging the selected context to contain information that is both useful for the query and diverse from previously selected documents. 
    \item \textbf{Adaptive-K}~\cite{taguchi2025efficient}: 
    It heuristically selects $D$ by stopping where query-document similarity has the largest drop between two consecutive ranked documents. 
    Since $D$ is determined by the method rather than fixed externally, we report the mean selected $D$ and its corresponding recall as a single point in the plots. 
\end{itemize}
To further analyze the role of the relevance weight in Eq.~\ref{eq:coverage}, we conduct ablation studies with three variants:
\begin{itemize}
    \item \textbf{w/o query \& anchor relevance}: 
    removes the relevance weight $\rho_i$ from the coverage objective:     
    \vspace{-1mm}
    \begin{equation}
        F(S) = \sum_v q(v) \max_{d_i \in S}\left[ d_i(v) \right].
    \end{equation}
    \vspace{-4mm}
    \item \textbf{w/o query relevance}: 
    uses only soft-anchor relevance by setting $b_j=0$ in Eq.~\ref{eq:anchor_refine}.
    \item \textbf{w/o anchor relevance}: 
    uses only direct query relevance by setting $\bar{a}_j=0$ in Eq.~\ref{eq:anchor_refine}.
\end{itemize}

\subsection{Evidence Frontloading Improves Recall} 
\label{sec:frontload_improve}

\begin{figure*}
    \centering
    \includegraphics[width=\linewidth]{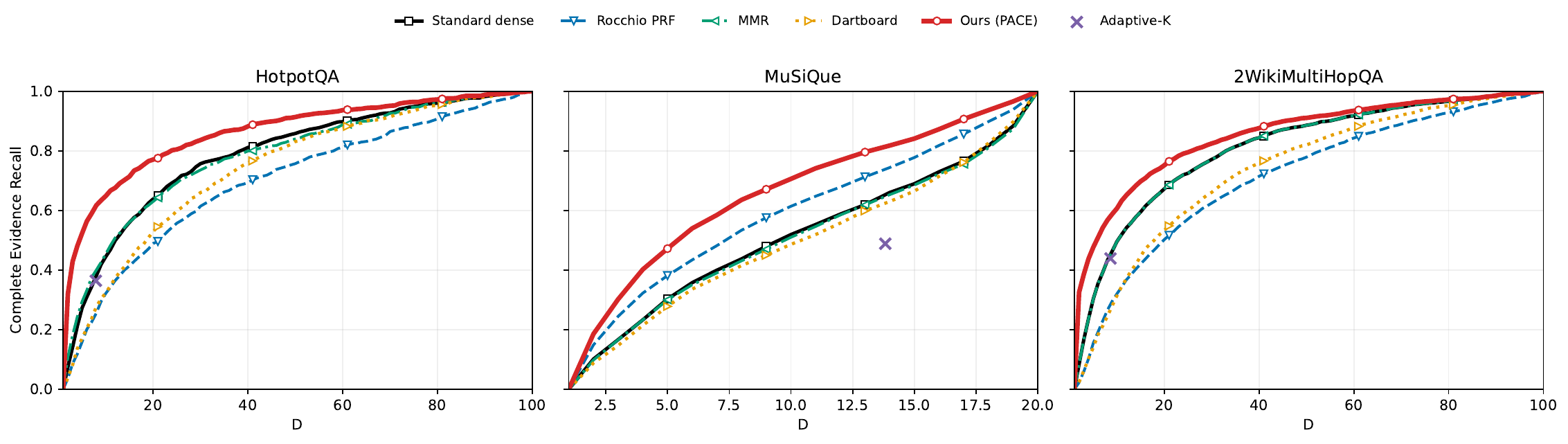}
    \vspace{-8.1mm}
    \caption{
    Complete evidence recall@D across three multi-hop QA datasets. \textsf{PACE} tends to cover all required evidence with smaller reranking budgets than training-free baselines.}
    \label{fig:complete_recall}
    \vspace{-5mm}
\end{figure*}

\begin{figure*}
    \centering
    \includegraphics[width=\linewidth]{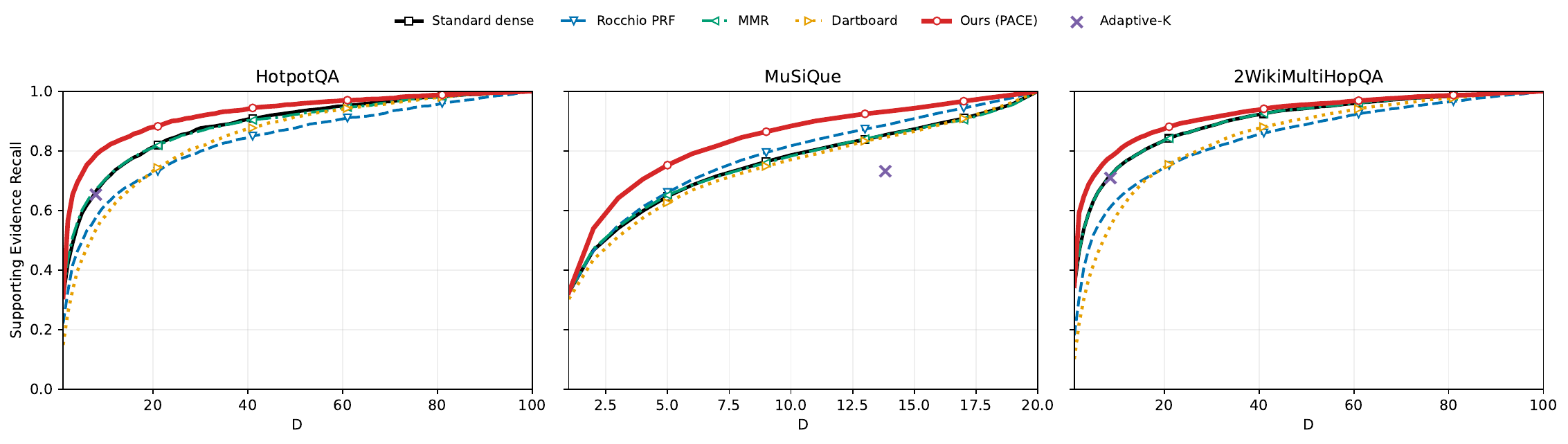}
    \vspace{-8.1mm}
    \caption{Supporting evidence recall@D across three multi-hop QA datasets. 
    \textsf{PACE} retrieves a larger fraction of supporting evidence with small reranking budgets. }
    \label{fig:support_recall}
    \vspace{-5mm}
\end{figure*}

\begin{figure*}
    \centering
    \includegraphics[width=\linewidth]{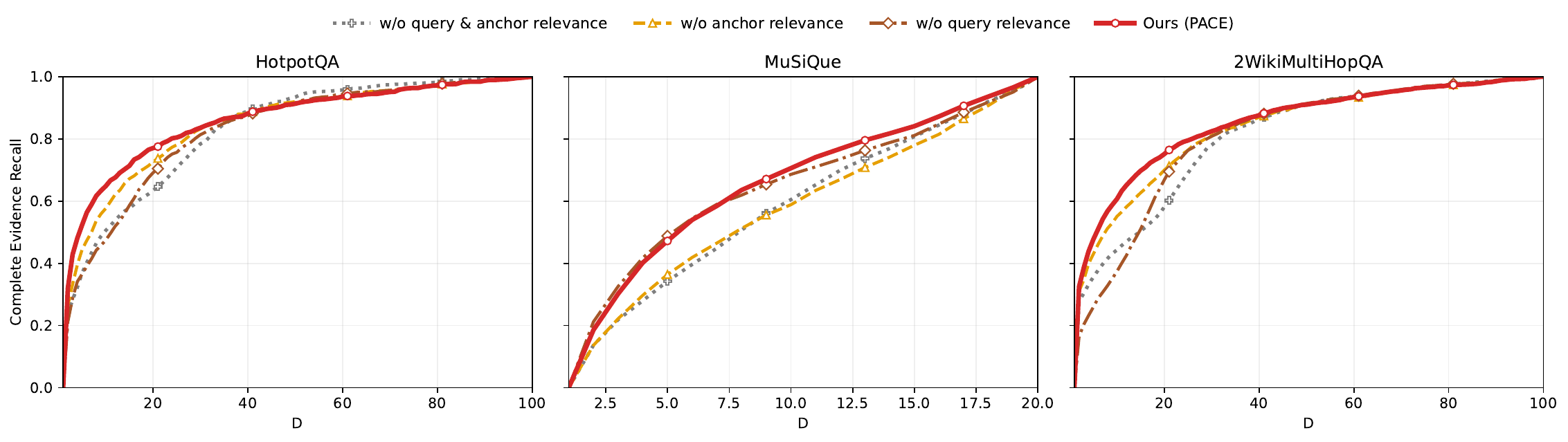}
    \vspace{-8.2mm}
    \caption{
    Ablation study of the marginal evidence coverage objective in Eq.~\ref{eq:coverage}. 
    Removing query relevance, anchor relevance, or both weakens complete evidence recall, showing that direct query relevance and soft-anchor document dependencies jointly contribute to effective evidence frontloading.}
    \label{fig:ablation_complete_recall}
    \vspace{-5mm}
\end{figure*}

We first evaluate the effectivenes of \textsf{PACE} evidence frontloading under a fixed budget of $D=100$, focusing on whether it can move supporting evidence into top-ranked candidates. 
Figure~\ref{fig:complete_recall} and~\ref{fig:support_recall} report evidence recall at different reranking budgets. 
Across all three datasets, \textsf{PACE} consistently improves both complete evidence recall@D and supporting evidence recall@D over training-free baselines, especially at small $D$, which is the key regime for reducing reranker workload. 
For example, on HotpotQA, \textsf{PACE} at $D=20$ achieves comparable evidence recall to the best baseline at $D=40$, using only half of the reranking budget. 

The baselines show that heuristic diversity does not necessarily improve evidence coverage. 
MMR and Dartboard often underperform the original dense ranking because they penalize similar documents, while multi-hop supporting documents can be semantically related and jointly necessary for answering. 
PRF can benefit from expanding the query with top-ranked documents, but it is sensitive to noisy initial results and may suffer from query drift. 
Adaptive-K is shown as a single point because it selects $D$ automatically; its recall is bounded by the original dense ranking at the selected position since it does not reorder documents. 
These results show that \textsf{PACE} \textit{moves necessary evidence into top-ranked candidates, rather than only improving recall at large budgets.}

Figure~\ref{fig:ablation_complete_recall} further analyzes the relevance weight $\rho$ in the marginal evidence coverage objective in Eq.~\ref{eq:coverage}. 
Removing both query and anchor relevance usually leads to the worst performance, showing that pure coverage without relevance guidance cannot reliably identify useful evidence. 
Using only query relevance prioritizes documents directly related to the question, but can miss evidence that is weakly related to the query but instead connected to its one-hop evidence. 
Using only anchor relevance can recover such evidence, but may also promote irrelevant documents when the anchors are noisy. 
As a result, using either source alone is generally suboptimal, especially at small $D$, where \textsf{PACE} consistently achieves higher recall. 
MuSiQue shows a slightly different pattern, where anchor relevance is relatively more effective than query relevance. 
This is likely because MuSiQue uses a closed-context setting with at most 20 candidates, which reduces noise in anchor estimation, and its questions are constructed by composing single-hop questions through bridge entities. 
This also explains why PRF performs relatively better on MuSiQue than on the open-domain datasets, since its pseudo-relevance feedback is less affected by noisy top-ranked documents in the closed-context candidate set. 
Overall, combining query and anchor relevance gives the strongest and most stable evidence frontloading across datasets.

\subsection{\textsf{PACE} Improves Efficiency while Preserving Recall}
\label{sec:pace_improve}

\begin{figure*}
    \centering
    \includegraphics[width=\linewidth]{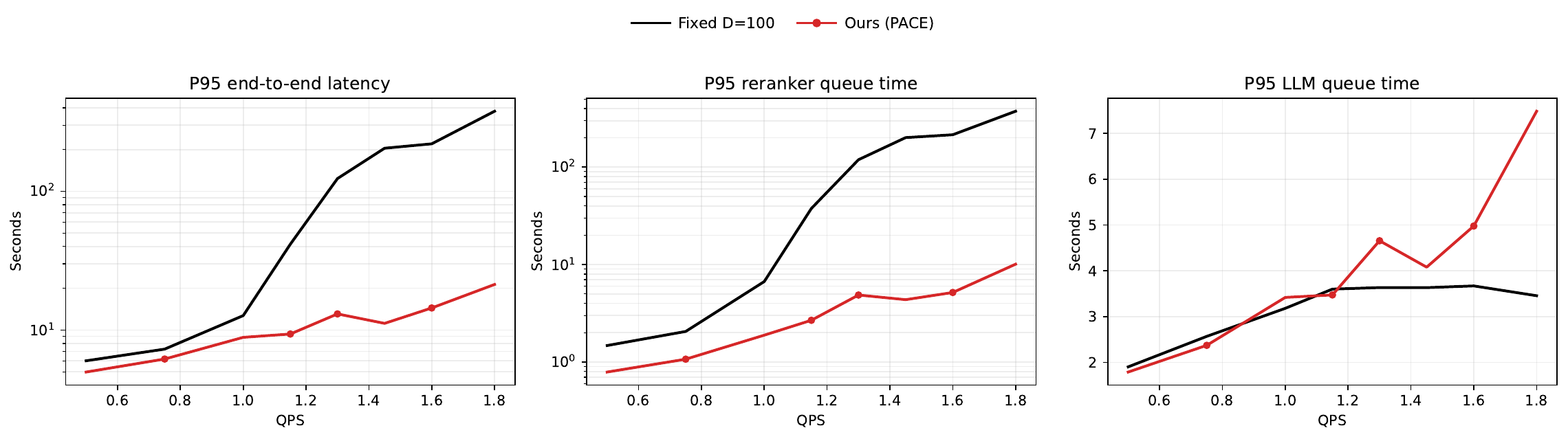}
    \vspace{-7mm}
    \caption{
    Latency comparison under ranking-heavy workloads. 
    \textsf{PACE} substantially reduces p95 end-to-end latency by lowering reranker queueing time with a smaller adaptive reranking budget. 
    Although LLM queueing time may increase as more requests pass through the reranker, but the dominant upstream bottleneck is greatly relieved. The fixed baseline uses $D=100$ throughout.}
    \label{fig:online_latency}
    \vspace{-3mm}
\end{figure*}

\begin{figure*}
    \centering
    \includegraphics[width=\linewidth]{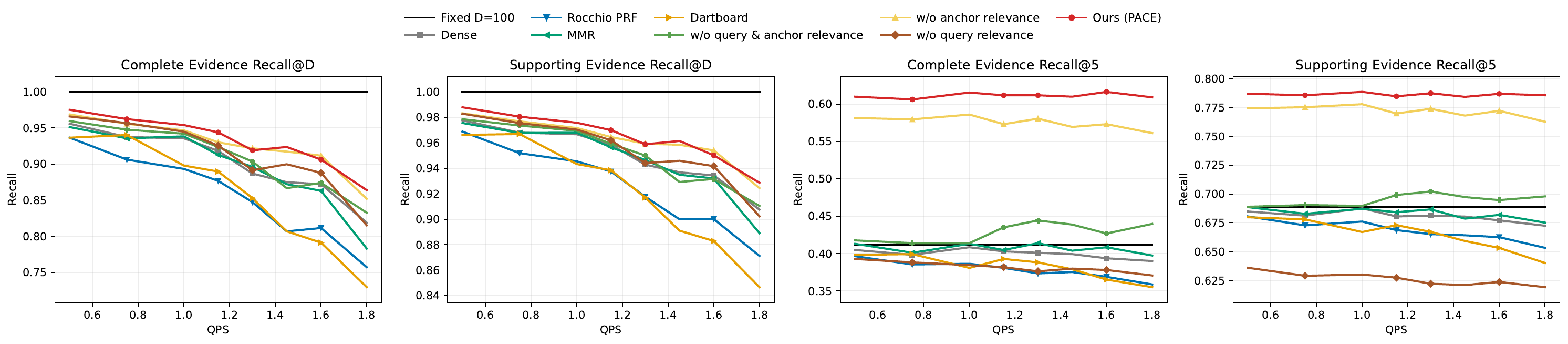}
    \vspace{-7mm}
    \caption{
    Evidence recall under pressure-adaptive budgeting on HotpotQA. 
    The first two subfigures report recall@D before reranking, and the last two report recall@5 after reranking, where $D$ is dynamically selected and $K=5$ documents are sent to the LLM. 
    \textit{Less can be more when the top-ranked candidates are evidence-dense}: as QPS increases, \textsf{PACE} reduces the reranking budget but maintains higher complete and supporting evidence recall than baselines and ablation variants. 
    % The fixed-$D$ baseline keeps $D=100$ for all QPS values. 
    }
    \label{fig:online_recall}
    \vspace{-3mm}
\end{figure*}

\begin{figure}
    \centering
    \includegraphics[width=\linewidth]{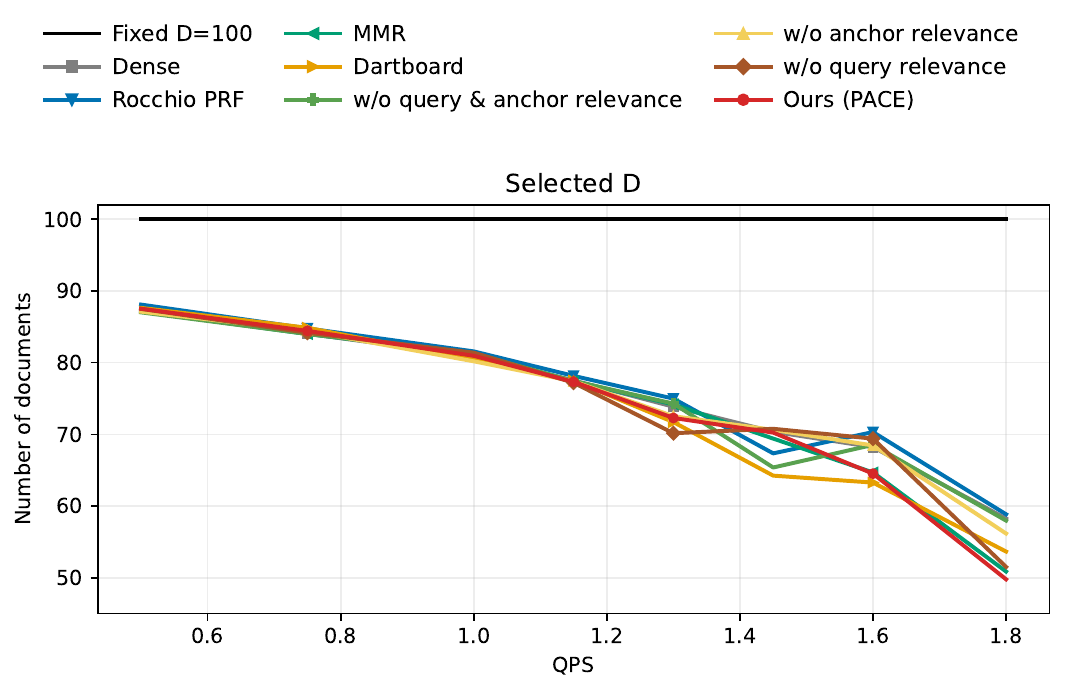}
    \vspace{-8mm}
    \caption{
    Average selected reranking budget under increasing QPS. 
    \textsf{PACE} reduces $D$ as serving pressure grows, but maintains higher evidence recall than other methods at comparable selected budgets.}
    \label{fig:online_select_D}
    \vspace{-7mm}
\end{figure}

The static offline results in Section~\ref{sec:frontload_improve} show that evidence frontloading improves recall across different cutoffs $D$. 
We now evaluate whether \textsf{PACE} can translate this advantage into end-to-end latency gains under online serving, while preserving evidence recall after dynamically reducing $D$. 
We conduct online serving experiments on HotpotQA and report latency in Figure~\ref{fig:online_latency} and recall in Figures~\ref{fig:online_recall}.

\vspace{1mm}
\noindent \textit{Pressure-adaptive budget selection improves system latency.} 
Figure~\ref{fig:online_latency} compares \textsf{PACE} with a fixed $D=100$ reranking budget under ranking-heavy workloads. 
As QPS increases, the fixed-$D=100$ baseline quickly accumulates reranker queueing delay, causing p95 end-to-end latency to grow sharply. 
In contrast, \textsf{PACE} dynamically reduces the reranking budget according to system pressure, which substantially lowers reranker queueing time and stabilizes end-to-end latency. 
Although LLM queue time can increase slightly because more requests pass through the reranker, the dominant upstream bottleneck is relieved, leading to much lower overall latency.

\vspace{1mm}
\noindent \textit{\textbf{\textsf{PACE}} preserves recall under smaller adaptive budget.} 
To fairly compare methods in the online serving setting, we apply the same pressure-adaptive budgeting policy to all reordering methods. 
As a result, all adaptive methods operate with similar latency and comparable selected reranking budgets, since reordering a small candidate set is negligible compared with total serving latency; for example, \textsf{PACE} selection accounts for less than 0.007\% of p95 end-to-end latency. 
This allows us to focus on how evidence recall changes under increasing serving load. 
Figure~\ref{fig:online_select_D} shows that as QPS increases, \textsf{PACE} gradually reduces $D$, similar to other adaptive methods. 
At QPS$=1.8$, most adaptive methods select a budget around $50$--$60$ documents to relieve reranker pressure.

Figure~\ref{fig:online_recall} reports evidence recall at two stages: recall@D before reranking, and recall@K after reranking, where we use the common setting $K=5$~\cite{chirkova2025provence}. 
Before reranking, the fixed $D=100$ baseline maintains recall@D $1.0$ because no candidates are dropped. 
All adaptive methods show lower recall@D as QPS increases because their selected $D$ decreases. 
However, \textsf{PACE} maintains higher complete and supporting evidence recall than baselines and ablation variants at comparable budgets. 
This shows that its latency reduction does not come from simply dropping candidates, but from combining adaptive budgeting to an evidence-dense  candidate ranking.

After reranking, \textsf{PACE} shows that \textit{less can be more when the top-ranked candidates are evidence-dense}: it achieves the highest recall@5 among the documents sent to the LLM, even surpassing the fixed $D=100$, and remains stable as QPS increases. 
This suggests that a larger reranking budget does not always improve final top-$K$ evidence recall, since additional candidates may introduce noise. 
By frontloading useful evidence before reranking and reducing $D$ under pressure, \textsf{PACE} provides the reranker with a smaller but more evidence-dense candidate set, making useful evidence more likely to appear in the final context. 
For example, at QPS$=1.8$, \textsf{PACE} uses nearly half the reranking budget of fixed $D=100$, but achieves about 20\% higher recall@5. 
In contrast, most baselines fall below fixed $D=100$ after reranking, and their recall generally decreases as QPS increases because the adaptive budget becomes smaller. 
We further validate this effect under different fixed reranking budgets on two datasets in Table~\ref{tab:less_be_more}: \textsf{PACE} achieves higher final recall at $K=5$ than the full-budget baseline even with substantially smaller $D$, showing that its gains come from evidence frontloading rather than simply preserving more candidates.

\begin{table}[!t]
\centering
\small
\setlength{\tabcolsep}{6pt}
\caption{\textit{Less can be more}. Complete evidence recall under fixed reranking budgets. Fixed $D=100$ denotes the full-budget baseline, and $R@K$ with $K=5$ measures final evidence recall after reranking. Values in parentheses indicate absolute percentage-point changes in $R@K$ over the full-budget baseline on the same dataset.}
\label{tab:less_be_more}
\vspace{-1mm}
\begin{tabular}{llccc}
\toprule
Dataset & Method & $D$ & Comp. R@$D$ & Comp. R@$K$ \\
\midrule
\multirow{7}{*}{HotpotQA}
& Standard Dense & 100 & 100 & 41.13 \\
\cmidrule(lr){2-5}
& \multirow{3}{*}{Dartboard} 
  & 80 & 95.31     & \textcolor{negred}{{40.48 {\scriptsize $(-0.65)$}}}  \\
& & 50 & 83.17 &  \textcolor{negred}{{38.18 {\scriptsize $(-2.95)$}}} \\
& & 20 & 52.99 & \textcolor{negred}{{33.22 {\scriptsize $(-7.91)$}}} \\
% \cmidrule(lr){2-5}
% & \multirow{3}{*}{Dartboard} 
%   & 30 & -- & -- \\
% & & 50 & -- & -- \\
% & & 70 & -- & -- \\
\cmidrule(lr){2-5}
& \multirow{3}{*}{\textsf{PACE}} 
  & 80 & 97.25 & \textcolor{posgreen}{\textbf{60.81 {\scriptsize $(+19.68)$}}} \\
& & 50 & 91.36 & \textcolor{posgreen}{\textbf{62.38 {\scriptsize $(+21.25)$}}} \\
& & 20 & 77.10 & \textcolor{posgreen}{\textbf{60.26 {\scriptsize $(+19.13)$}}} \\
\midrule
\multirow{7}{*}{2Wiki}
& Standard Dense & 100 & 100 & 48.76 \\
\cmidrule(lr){2-5}
& \multirow{3}{*}{Dartboard} 
  & 80 & 95.15 & \textcolor{negred}{{48.10 {\scriptsize $(-0.66)$}}} \\
& & 50 & 81.97 & \textcolor{negred}{{47.40 {\scriptsize $(-1.36)$}}} \\
& & 20 & 53.41 & \textcolor{negred}{{39.96 {\scriptsize $(-8.80)$}}} \\
% \cmidrule(lr){2-5}
% & \multirow{3}{*}{Dartboard} 
%   & 30 & -- & -- \\
% & & 50 & -- & -- \\
% & & 70 & -- & -- \\
\cmidrule(lr){2-5}
& \multirow{3}{*}{\textsf{PACE}} 
  & 80 & 97.24 & \textcolor{posgreen}{\textbf{54.22} {\scriptsize $(+5.46)$}} \\
& & 50 & 91.06 & \textcolor{posgreen}{\textbf{54.39} {\scriptsize $(+5.63)$}} \\
& & 20 & 75.19 & \textcolor{posgreen}{\textbf{53.94} {\scriptsize $(+5.18)$}}\\
\bottomrule
\end{tabular}
\vspace{-4mm}
\end{table}

%% file: Sections/Related_Work.tex
\vspace{-1mm}
\section{Related Work}

\noindent \textbf{Relieving generation-heavy bottlenecks.}
Existing work on efficient RAG often reduces the cost of downstream LLM generation. 
Serving systems improve throughput and latency through optimized batching, scheduling, KV-cache management, and prefill execution~\cite{yu2022orca,kwon2023efficient,zheng2024sglang,agrawal2024taming}. 
METIS~\cite{ray2025metis} adapts query-level configurations, such as retrieved chunks and synthesis methods, to balance response quality and latency. 
Context compression methods, such as RECOMP~\cite{xu2023recomp}, LLMLingua~\cite{jiang2023llmlingua}, and Provence~\cite{chirkova2025provence}, shorten retrieved contexts by removing or summarizing less useful tokens. 
These methods reduce generation-side cost, but they do not directly address shifting bottlenecks between reranking and generation or relieve reranker-side pressure while preserving evidence recall.

% reranker can be expensive~\cite{dejean2026efficient,yu2024rankrag}, especially when the recalled documents are lot, which can be bottleneck~\cite{an2025hyperrag}. 
% Early-exit methods reduce reranking cost by making decisions at intermediate Transformer layers, either using auxiliary classifiers~\cite{xin2020early} or layer-wise query-document similarity estimates before completing the full forward pass~\cite{busolin2025efficient}. 
% RRK~\cite{dejean2026efficient} accelerates listwise LLM reranking by replacing full document texts with learned fixed-size multi-token representations, substantially reducing the sequence length processed by the reranker. 
% Adaptive-k~\cite{taguchi2025efficient} heuristically selects $D$ by stopping where query-document similarity has the largest drop between two consecutive ranked documents. Thus it can relieve reranker pressure at some degree indirectly. 
% In our work, we do not change the reranker model, instead, we change the input of reranker, the order and the number of documents to both improve the recall as well as the latency. 

\vspace{0.5mm}
\noindent \textbf{Relieving ranking-heavy bottlenecks.}
Rerankers improve retrieval quality but become expensive when many query-document pairs must be scored~\cite{dejean2026efficient,yu2024rankrag,an2025hyperrag}. 
Prior work mainly reduces this cost by accelerating the reranker itself. 
Early-exit methods make decisions at intermediate Transformer layers using auxiliary classifiers~\cite{xin2020early} or layer-wise query-document similarity estimates~\cite{busolin2025efficient}. 
% RRK~\cite{dejean2026efficient} accelerates listwise LLM reranking by replacing full document texts with learned fixed-size representations. 
Adaptive-K~\cite{taguchi2025efficient} selects the budget at the largest drop in query-document similarity. 
These methods reduce reranking computation or choose a heuristic budget point, but they do not explicitly adapt the reranking budget to real-time reranker--LLM pressure. 
\textsf{PACE} is complementary: it keeps the reranker unchanged, but controls which candidates enter reranking and how many are processed under online serving pressure.

% \noindent \textbf{Improving evidence recall. }
% (1) Multi-hop evidence retrieval: 
% MDR~\cite{xiong2020answering} extends dense retrieval to multi-hop QA by recursively conditioning each retrieval step on evidence obtained in previous hops. 
% Baleen~\cite{khattab2021baleen} improves multi-hop retrieval through condensed intermediate evidence, focused late interaction, and latent hop ordering to control search-space growth and model complex information needs.
% However, these methods need to train a new retriever, contrast, our proposed method is training-free. 
% IRCoT~\cite{trivedi2023interleaving} interleaves chain-of-thought reasoning with retrieval, allowing intermediate reasoning steps to guide subsequent evidence acquisition and vice versa. Though it is training-free, the interleave of LLM reasoning and retrieval obviously cannot relieve but worsen the RAG bottleneck. 
% (2) diversity-aware retrieval: 
% MMR~\cite{carbonell1998use} greedily balances query relevance against redundancy with previously selected documents to produce diversified retrieval results. 
% xQuAD~\cite{santos2010explicit} explicitly models a query as multiple sub-queries and diversifies retrieval by favoring documents that cover relevant but underrepresented query aspects. 
% Dartboard~\cite{pickett2024better} replaces explicit relevance–diversity trade-offs with an information-theoretic objective that maximizes relevant information gain, causing diversity to emerge implicitly during retrieval. 

\vspace{0.5mm}
\noindent \textbf{Improving evidence recall.}
Evidence recall is often improved through multi-hop retrieval or diversity-aware ranking. 
Multi-hop retrievers acquire supporting evidence through iterative retrieval or reasoning-guided search: 
MDR~\cite{xiong2020answering} recursively conditions later retrieval steps on previously retrieved evidence, Baleen~\cite{khattab2021baleen} uses condensed intermediate evidence to guide subsequent hops, and IRCoT~\cite{trivedi2023interleaving} interleaves LLM reasoning with retrieval. 
These methods target evidence acquisition, but require specialized retriever training or additional LLM-retrieval iterations, which can increase serving cost. 
Diversity-aware methods, which improves recall by selecting non-redundant documents from a candidate set.
MMR~\cite{carbonell1998use} balances query relevance with novelty, xQuAD~\cite{santos2010explicit} promotes coverage of different query aspects, and Dartboard~\cite{pickett2024better} maximizes relevant information gain for diverse RAG retrieval.
These methods are closer to our setting because they reorder or select from existing candidates. 
However, they typically rely on coarse document-level distances or manually designed trade-offs, while multi-hop QA may require preserving documents that are similar but jointly necessary. 
\textsf{PACE} differs by \textit{measuring complementarity over fine-grained semantic dimensions}, \textit{supporting both diversity and evidence chaining without additional training or tuned parameters}, and \textit{providing monotone submodular objective with a $(1-1/e)$ approximation guarantee}.